\newcommand{\at}{{a_{t}}}
\newcommand{\st}{{s_{t}}}
\newcommand{\stplus}{{s_{t+1}}}
\newcommand{\lat}{{\theta}}
\newcommand{\latt}{{\theta_t}}
\newcommand{\pol}{{\pi (a|s)}}
\newcommand{\polstar}{{\pi^* (a|s)}}
\newcommand{\polt}{{\pi (\at|\st)}}
\newcommand{\pri}{{\rho (a|s)}}
\newcommand{\prit}{{\rho (\at|\st)}}
\newcommand{\bel}{{\psi(\lat|a,s)}}
\newcommand{\belstar}{{\psi^*(\lat|a,s)}}
\newcommand{\belt}{{\psi(\latt|\at,\st)}}
\newcommand{\bayespost}{{\mu(\lat|a,s)}}
\newcommand{\bayespostt}{{\mu(\latt|\at,\st)}}
\newcommand{\trant}{T(\stplus |\at,\st)}
\newcommand{\tran}{T(s'|a,s)}
\newcommand{\trantheta}{{T_{\theta}(s'|a,s)}}
\newcommand{\thetabold}{{\boldsymbol{\theta}}}
\newcommand{\rew}{R_{s,a}^{s'}}
\newcommand{\rewt}{R_{\st,\at}^{\stplus}}
\newcommand{\disct}{{\gamma^t}}
\newcommand{\argmax}{\operatornamewithlimits{argmax}}
\newcommand{\ext}{\operatornamewithlimits{ext}}
\newcommand{\expec}{{\mathbb{E}}}
\newcommand{\expectedUtilitystar}{{\expec_{\trantheta} \left[ \rew + \gamma F^* (s') \right]}}
\newcommand{\expectedUtilityAlt}{{\expec_{\trantheta} \left[ \rew + \gamma F (s') \right]}}
\newcommand{\expectedUtilityOld}{{\expec_{\trantheta} \left[ \rew + \gamma F_{\textrm{old}} (s') \right]}}
\newcommand{\dkl}{{D_{\text{KL}}}}
\begin{document}
%
%\frontmatter          % for the preliminaries
%
\pagestyle{headings}  % switches on printing of running heads
%\addtocmark{Hamiltonian Mechanics} % additional mark in the TOC
%

\mainmatter              % start of the contributions

\title{Planning with Information-Processing Constraints and Model Uncertainty in Markov Decision Processes}
\titlerunning{Information-Processing Constraints and Model Uncertainty in MDPs}  % abbreviated title (for running head)
%                                     also used for the TOC unless
%                                     \toctitle is used
%
\author{Jordi Grau-Moya\inst{1,2,3} \and Felix Leibfried\inst{1,2,3} \and
Tim Genewein\inst{1,2,3} \and Daniel~A.~Braun\inst{1,2}}
\authorrunning{Grau-Moya et al.} % abbreviated author list (for running head)
%
%%%% list of authors for the TOC (use if author list has to be modified)
\tocauthor{Ivar Ekeland, Roger Temam, Jeffrey Dean, David Grove,
Craig Chambers, Kim B. Bruce, and Elisa Bertino}
\institute{Max Planck Institute for Intelligent Systems, T\"ubingen, Germany,\\
\email{jordi.grau@tuebingen.mpg.de},\\ 
\and
Max Planck Institute for Biological Cybernetics, T\"ubingen, Germany,\\
\and
Graduate Training Centre for Neuroscience, T\"ubingen, Germany
}

\maketitle              % typeset the title of the contribution

\begin{abstract}
Information-theoretic principles for learning and acting have been proposed to solve particular classes of Markov Decision Problems.
Mathematically, such approaches are governed by a variational free energy principle and allow solving MDP planning problems with information-processing constraints expressed in terms of a Kullback-Leibler divergence with respect to a reference distribution. Here we consider a generalization of such MDP planners by taking model uncertainty into account. As model uncertainty can also be formalized as an information-processing constraint, we can derive a unified solution from a single generalized variational principle. We provide a generalized value iteration scheme together with a convergence proof. As limit cases, this generalized scheme includes standard value iteration with a known model, Bayesian MDP planning, and robust planning. We demonstrate the benefits of this approach in a grid world simulation. 
\keywords{bounded rationality, model uncertainty, robustness, planning, Markov Decision Processes}
\end{abstract}

\section{Introduction}

The problem of planning in Markov Decision Processes was famously addressed by Bellman who developed the eponymous principle in 1957 \cite{Bellman:1957}. Since then numerous variants of this principle have flourished in the literature. Here we are particularly interested in a generalization of the Bellman principle that takes information-theoretic constraints into account. In the recent past there has been a special interest in the Kullback-Leibler divergence as a constraint to limit deviations of the action policy from a prior. This can be interesting in a number of ways. Todorov \cite{todorov2006linearly,todorov2009efficient}, for example, has transformed the general MDP problem into a restricted problem class without explicit action variables, where control directly changes the dynamics of the environment and control costs are measured by the Kullback-Leibler divergence between controlled and uncontrolled dynamics. This simplification allows mapping the Bellman recursion to a linear algebra problem. This approach can also be be generalized to continuous state spaces leading to path integral control  \cite{braun2011path,Broek2010RiskSP}. The same equations can also be interpreted in terms of \emph{bounded rational} decision-making where the decision-maker has limited computational resources that allow only limited deviations from a prior decision strategy (measured by the Kullback-Leiber divergence in bits) \cite{ortega2013thermodynamics}. Such a decision-maker can also be instantiated by a sampling process that has restrictions in the number of samples it can afford \cite{ortega2014generalized}. 
Disregarding the possibility of a sampling-based interpretation, the Kullback-Leibler divergence introduces a control information cost that is interesting in its own right when formalizing the perception action cycle \cite{tishby2011information}.

While the above frameworks have led to interesting computational advances, so far they have neglected the possibility of model misspecification in the MDP setting. 
Model misspecification or model uncertainty does not refer to  the uncertainty arising due to the stochastic nature of the  environment (usually called risk-uncertainty in the economic literature), but refers to the uncertainty with respect to the latent variables that specify the MDP. In Bayes-Adaptive MDPs \cite{duff2002optimal}, for example,  the uncertainty over the latent parameters of the MDP is explicitly represented, such that new information can be incorporated with Bayesian inference.  However, Bayes-Adaptive MDPs are not robust with respect to model misspecification and have no performance guarantees when  planning with wrong models \cite{mannor2007bias}. Accordingly, there has been substantial interest in developing robust MDP planners \cite{nilim2005robust,iyengar2005robust,wiesemann2013robust}. One way to take model uncertainty into account is to bias an agent's belief model from a reference Bayesian model towards worst-case scenarios; thus avoiding disastrous outcomes by not visiting states where the transition probabilities are not known. 
Conversely, the belief model can also be biased towards best-case scenarios as a measure to drive exploration---also referred in the literature as \emph{optimism in face of uncertainty} \cite{szita2008many,szita2010model}. 

When comparing the literature on information-theoretic control and model uncertainty, it is interesting to see that some notions of model uncertainty follow exactly the same mathematical principles as the principles of relative entropy control \cite{todorov2009efficient}.
In this paper we therefore formulate a unified and combined optimization problem for MDP planning that takes \emph{both}, model uncertainty and bounded rationality into account. This new optimization problem can be solved by a generalized value iteration algorithm. We provide a theoretical analysis of its convergence properties and simulations in a grid world.

\section{Background and Notation}
In the MDP setting the agent at time $t$ interacts with the environment by taking action $\at \in \mathcal A$ while in state $\st \in \mathcal{S}$. Then the environment updates the state of the agent to $\stplus \in \mathcal{S}$ according to the transition probabilities $\trant$. After each transition the agent receives a reward $\rewt \in \mathcal R$ that is bounded. For our purposes we will consider $\mathcal A$ and $\mathcal S$ to be finite.
The aim of the agent is to choose its policy $\pol$ in order to maximize the total discounted expected reward or value function for any $s \in \mathcal S$
\begin{equation*}
V^* (s) =\max_\pi \lim_{T\rightarrow \infty} \expec \left[\sum_{t=0}^{T-1}\disct   \rewt \right] 
\end{equation*}
with discount factor $0 \le \gamma < 1$. The expectation is over all possible trajectories $\xi = s_0, a_0, s_1 \dots$ of state and action pairs distributed according to $p(\xi) =$ $\prod_{t=0}^{T-1}\polt $ $T(\stplus|\at, \st)$. It can be shown that the optimal value function satisfies the following recursion
\begin{equation}
V^* (s) =  \max_\pi \sum_{a, s'} \pol \tran \left[ \rew + \gamma V^* (s')\right]. \label{eq:valuefunction}
\end{equation}

At this point there are two important implicit assumptions. The first is that the policy $\pi$ can be chosen arbitrarily without any constraints which, for example, might not be true for a bounded rational agent with limited information-processing capabilities. The second  is that the agent needs to know the transition-model $\tran$, but this model is in practice unknown or even misspecified with respect to the environment's true transition-probabilities, specially  at initial stages of learning. In the following, we explain how to incorporate both bounded rationality and model uncertainty into agents.

\subsection{Information-Theoretic Constraints for Acting}
Consider a one-step decision-making problem where the agent is in state $s$ and has to choose a single action $a$ from the set $\mathcal A$ to maximize the reward $\rew$, where $s'$ is the next the state. A perfectly rational agent  selects the optimal action $a^*(s) = \argmax_a \sum_{s'} \tran \rew$. However, a bounded rational agent has only limited resources to find the maximum of the function $\sum_{s'} \tran \rew$. One way to model such an agent is to assume that the agent has a prior choice strategy $\pri$ in state $s$ \emph{before} a deliberation process sets in that refines the choice strategy to a posterior distribution $\pol$ that reflects the strategy \emph{after} deliberation.
Intuitively, because the deliberation resources are limited, the agent can only afford to deviate from the prior strategy by a certain amount of information bits. This can be quantified by the relative entropy $\dkl (\pi||\rho) = \sum_a \pol \log \frac{\pol}{\pri}$ that measures the average information cost of the policy $\pol$  using the source distribution $\pri$. For a bounded rational agent this relative entropy is bounded by some upper limit $K$. Thus, a bounded rational agent has to solve a constrained optimization problem that can be written as
\begin{equation*}
\underset{\pi}{\max} \sum_a \pol \sum_{s'}\tran  \rew ~~~~~~~~ \text{s.t.} ~~\dkl (\pi || \rho) \le K \nonumber
\end{equation*}
This problem can be rewritten as an unconstrained optimization problem 
\begin{align}
F^*(s) &= \max_\pi \sum_a \pol \sum_{s'} \tran \rew - \frac{1}{\alpha}\dkl (\pi||  \rho) \label{eq:free_energy_max}\\ 
&= \frac{1}{\alpha} \log \sum_a \pri e^{\alpha \sum_{s'}\tran \rew} \label{eq:free_energy_nomax}.
\end{align}
where $F^*$ is a free energy that quantifies the value of the policy $\pi$ by trading off the average reward against the information cost. The optimal strategy can be expressed analytically in closed-form as  
\begin{equation*}
	\polstar = \frac{\pri e^{\alpha \sum_{s'}\tran \rew}}{Z_\alpha(s)} 
\end{equation*}
with partition sum $Z_\alpha (s)= \sum_a \pri \exp \left(\alpha  \sum_{s'} \tran \rew \right)$. Therefore, the maximum operator in \eqref{eq:free_energy_max} can be eliminated and the free energy can be rewritten as in \eqref{eq:free_energy_nomax}.
The Lagrange multiplier $\alpha$ quantifies the boundedness of the agent. By setting $\alpha \rightarrow \infty$ we recover a perfectly rational agent with optimal policy $\pi^*(a|s) = \delta (a - a^*(s))$. For $\alpha = 0$ the agent has no computational resources and the agent's optimal policy is to act according to the prior $\pi^*(a|s) = \rho(a|s)$. Intermediate values of $\alpha$ lead to a spectrum of bounded rational agents.

\subsection{Information-Theoretic Constraints for Model Uncertainty}

In the following we assume that the agent has a model of the environment $T_\theta(s'|a,s)$ that depends on some latent variables $\theta \in \Theta$. In the MDP setting, the agent holds a belief $\bayespost$ regarding the environmental dynamics where $\theta$ is a unit vector of transition probabilities into all possible states $s'$.
While interacting with the environment the agent can incorporate new data by forming the Bayesian posterior  $\mu(\theta|a,s, D)$, where $D$ is the observed data.
When the agent has observed an infinite amount of data (and assuming $\theta^*(a,s) \in \Theta$) the  belief will converge to the delta distribution  $\mu(\theta|s,a,D)= \delta (\theta -\theta^*(a,s))$ and the agent will act optimally according to the true transition probabilities, exactly as in ordinary optimal choice strategies with known models.
When acting under a limited amount of data the agent cannot determine the value of an action $a$ with the true transition model according to $\sum_{s'} \tran \rew$, but it can only determine an expected value according to its beliefs $\int_\theta \bayespost \sum_{s'} \trantheta \rew$. 

The Bayesian model $\mu$ can be subject to model misspecification (e.g. by  having a wrong likelihood or a bad prior) and thus the agent  might want to allow deviations from its model towards best-case (optimistic agent) or worst-case (pessimistic agent) scenarios up to a certain extent, in order to act more robustly or to enhance its performance in a friendly environment \cite{hansen2008robustness}. Such deviations can be measured by the relative entropy $\dkl (\psi|\mu)$ between the Bayesian posterior $\mu$ and a new biased model $\psi$. Effectively, this allows for mathematically formalizing model uncertainty, by not only considering the specified model but all models within a neighborhood of the specified model that deviate no more than a restricted number of bits. Then, the effective expected value of an action $a$ while having limited trust in the Bayesian posterior $\mu$ can be determined for the case of optimistic deviations as
\begin{equation}
F^* (a,s) = \max_\psi \int_\theta \bel \sum_{s'}\trantheta \rew - \frac{1}{\beta}\dkl(\psi||\mu)\label{eq:FEbeliefs_up}
\end{equation}
for $\beta >0$, and for the case of pessimistic deviations as
\begin{equation}
F^* (a,s) =\min_\psi \int_\theta \bel \sum_{s'}\trantheta \rew  - \frac{1}{\beta}\dkl(\psi||\mu) \label{eq:FEbeliefs_down}
\end{equation}
for $\beta<0$. Conveniently, both equations can be expressed as a single equation 
\begin{equation*}
F^*(a,s) = \frac{1}{\beta} \log Z_\beta (a,s) 
\end{equation*}
 with $\beta \in \mathbb{R}$ and $Z_\beta(s,a) = \int_\theta \bayespost \exp \left(\beta \sum_{s'} \trantheta \rew  \right)$ when inserting the optimal biased belief 
\begin{equation*}
\psi^*(\theta|a,s) = \frac{1}{Z_\beta(a,s)} \bayespost \exp \left( \beta \sum_{s'} \trantheta \rew \right) 
\end{equation*}
into either equation \eqref{eq:FEbeliefs_up} or \eqref{eq:FEbeliefs_down}. 
By adopting this formulation we can model any degree of trust in the belief $\mu$ allowing deviation towards worst-case or best-case with $-\infty \le \beta \le \infty$. For the case of $\beta \rightarrow -\infty $ we recover an infinitely pessimistic agent that considers only worst-case scenarios, for $\beta \rightarrow \infty$  an agent that is infinitely optimistic and for $\beta \rightarrow 0$ the Bayesian agent that fully trusts its model.

\section{Model Uncertainty and Bounded Rationality in MDPs}

In this section, we consider a bounded rational agent with model uncertainty in the infinite horizon setting of an MDP. In this  case the agent must take into account all future rewards and  information costs, thereby optimizing the following free energy objective
\begin{multline}\label{eq:FEobjective}
F^* (s) = \max_{\pi}\ext_{\psi} \lim_{T\rightarrow \infty} \expec \sum_{t=0}^{T-1} \disct\Bigg(   \rewt -\frac{1}{\beta} \log \frac{\psi(\latt| \at, \st)}{\bayespostt}
 -\frac{1}{\alpha} \log \frac{\polt}{\prit} \Bigg)
\end{multline}
where the extremum operator $\ext$ can be either $\max$ for $\beta >0$ or $\min$ for $\beta<0$,   $0 < \gamma<1$ is the discount factor and the expectation $\expec$ is over all trajectories $\xi = s_0, a_0,\theta_0,s_1,a_1,\dots a_{T-1}, \theta_{T-1},s_{T}$ with distribution $p(\xi) =$ $\prod_{t=0}^{T-1}\polt $ $\belt $ $T_{\theta_t}(\stplus|\at, \st)$. 
Importantly, this free energy objective satisfies a recursive relation and thereby generalizes Bellman's optimality principle to the case of model uncertainty and bounded rationality. In particular, equation \eqref{eq:FEobjective} fulfills the recursion
\begin{multline}
F^*(s) = \max_{\pi} \ext_{\psi} \expec_{\pol} \Bigg[ -  \frac{1}{\alpha} \log \frac{\pol}{\pri} + \\
 \expec_{\bel} \bigg[  - \frac{1}{\beta} \log \frac{\bel}{\bayespost} + \\
  \expectedUtilitystar \bigg]  \Bigg].\label{eq:recursion1}
\end{multline}

Applying variational calculus and following the same rationale as in the previous sections \cite{ortega2013thermodynamics}, the extremum operators can be eliminated and equation $\eqref{eq:recursion1}$ can be re-expressed as 
\begin{equation}
F^*(s) = \frac{1}{\alpha}\log  \expec_{\pri}  \left[ \expec_{\bayespost} \left[ \exp\left(\beta \expectedUtilitystar \right) \right]^{\frac{\alpha}{\beta}} \right]\label{eq:nicerecursion}
\end{equation}
because
\begin{align}
F^*(s) &= \max_\pi \expec_\pol \left[ \frac{1}{\beta} \log Z_\beta(a,s) - \frac{1}{\alpha} \log \frac{ \pol}{\pri} \right] \label{eq:felix} \\
&= \frac{1}{\alpha}\log  \expec_{\pri}  \left[ \exp\left( \frac{\alpha}{\beta} \log Z_\beta (a,s) \right) \right],\label{eq:FE1var}
\end{align}
where
\begin{align}
Z_\beta(a,s) & =  \ext_\psi \expec_{\bel} \bigg[ \expectedUtilitystar - \frac{1}{\beta} \log \frac{\bel}{\bayespost} \bigg] \label{eq:Z2var} \\
& = \expec_{\bayespost} \exp\left(\beta \expectedUtilitystar \right) \nonumber
\end{align}
with the optimizing arguments
\begin{equation*}
\belstar = \frac{1}{Z_\beta(a,s)}\bayespost \exp \left(\beta \expectedUtilityAlt \right)
\end{equation*}
\begin{equation}
\polstar = \frac{1}{Z_\alpha (s)}\pri \exp \left(\frac{\alpha }{\beta} \log Z_\beta (a,s)\right)\label{eq:policy_equilibrium}
\end{equation}
 and partition sum 
\begin{equation*}
Z_\alpha(s) = \expec_\pri \left[\exp  \left(  \frac{\alpha}{\beta} \log Z_\beta (a,s) \right) \right]. 
\end{equation*}

With this free energy we can model a range of different agents for different $\alpha$ and $\beta$. For example, by setting $\alpha \rightarrow \infty$ and $\beta \rightarrow 0$ we can recover a Bayesian MDP planner and by setting $\alpha \rightarrow \infty$ and $\beta \rightarrow -\infty$ we recover a robust planner. Additionally, for $\alpha \rightarrow \infty$ and when $\bayespost=\delta(\theta - \theta^*(a,s))$ we recover an agent with standard value function with known state transition model from equation \eqref{eq:valuefunction}.

\subsection{Free Energy Iteration Algorithm}

Solving the self-consistency equation \eqref{eq:nicerecursion} can be achieved by a generalized version of value iteration. Accordingly, the optimal solution can be obtained by initializing the free energy at some arbitrary value $F$ and applying a value iteration scheme $B^{i+1}F = BB^i F$ where we define the operator
 \begin{multline} \label{eq:recursion2}
 BF (s) = \max_{\pi} \ext_{\psi} \expec_{\pol} \Bigg[ -  \frac{1}{\alpha} \log \frac{\pol}{\pri} +\\
  \expec_{\bel} \bigg[  - \frac{1}{\beta} \log \frac{\bel}{\bayespost} + \\
   \expectedUtilityAlt \bigg]  \Bigg]
 \end{multline}
with $B^1F = BF$, which can be simplified to
\begin{equation*}
 BF(s) = \frac{1}{\alpha}\log  \expec_{\pri}  \left[ \expec_{\bayespost} \left[ \exp\left(\beta \expectedUtilityAlt \right) \right]^{\frac{\alpha}{\beta}} \right]
 \end{equation*}
In Algorithm \eqref{alg:FEiteration} we show the pseudo-code of this generalized value iteration scheme. Given state-dependent prior policies $\pri$ and the Bayesian posterior beliefs $\bayespost$ and the values of $\alpha$ and $\beta$, the algorithm outputs the equilibrium distributions for the action probabilities $\pol$, the biased beliefs $\bel$ and estimates of the free energy value function $F^*(s)$. The iteration is run until a convergence criterion is met. The convergence proof is shown in the next section.

\RestyleAlgo{boxruled}
\begin{algorithm}[h!]
 \SetKwInput{KwInitialize}{Initialize}
 \KwIn{ $\pri, \bayespost, \alpha, \beta$}
 \KwInitialize{  $F \leftarrow 0$, $F_{\textrm{old}} \leftarrow 0$}
\While{not converged}{
  \ForAll{$s \in \mathcal S$}{
  $F(s) \leftarrow \frac{1}{\alpha}\log  \expec_{\pri}  \left[ \expec_{\bayespost} \left[ \exp\left(\beta \expectedUtilityOld \right) \right]^{\frac{\alpha}{\beta}} \right]$ 
  } $F_{\textrm{old}} \leftarrow F $
}
$\pol \leftarrow \frac{1}{Z_\alpha (s)}\pri \exp \left(\frac{\alpha }{\beta} \log Z_\beta (a,s)\right)$\\
$\bel \leftarrow \frac{1}{Z_\beta(a,s)}\bayespost \exp\left({\beta \expectedUtilityAlt}\right)
$\\
 \KwRet{$\pol$, $\bel$, $F(s)$}
\BlankLine
\label{alg:FEiteration}
 \caption{Iterative algorithm solving the self-consistency equation \eqref{eq:nicerecursion}}
\end{algorithm}

\section{Convergence}\label{sec:proofs}

Here, we show that the value iteration scheme described through Algorithm~\ref{alg:FEiteration} converges to a unique fixed point satisfying Equation~\eqref{eq:nicerecursion}. To this end, we first prove the existence of a unique fixed point (Theorem~\ref{theo:fixedpoint}) following \cite{rubin2012trading,bertsekas1996neuro}, and subsequently prove the convergence of the value iteration scheme presupposing that a unique fixed point exists (Theorem~\ref{theo:conv}) following \cite{strehl2009reinforcement}.

\begin{theorem} \label{theo:fixedpoint}
Assuming a bounded reward function $\rew$, the optimal free-energy vector $F^*(s)$ is a unique fixed point of Bellman's equation $F^*=BF^*$, where the mapping $B:\mathbb{R}^{|\mathcal{S}|} \rightarrow \mathbb{R}^{|\mathcal{S}|}$ is defined as in equation \eqref{eq:recursion2}
\end{theorem}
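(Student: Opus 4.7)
The plan is to establish Theorem~\ref{theo:fixedpoint} by showing that the operator $B$ defined in equation~\eqref{eq:recursion2} is a $\gamma$-contraction in the supremum norm on $\mathbb{R}^{|\mathcal{S}|}$, and then invoking the Banach fixed-point theorem. This is the standard route used in \cite{bertsekas1996neuro,rubin2012trading}, but the nested log-sum-exp structure with two distinct inverse-temperature parameters $\alpha$ and $\beta$ (with $\beta$ possibly negative) requires care.

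First, I would verify that $B$ is well-defined as a self-map on $\mathbb{R}^{|\mathcal{S}|}$: since $R_{s,a}^{s'}$ is bounded and $F$ is assumed bounded on the finite set $\mathcal{S}$, the argument of each exponential in the simplified form
\begin{equation*}
 BF(s) = \frac{1}{\alpha}\log  \expec_{\pri}  \left[ \expec_{\bayespost} \left[ \exp\left(\beta \expectedUtilityAlt \right) \right]^{\frac{\alpha}{\beta}} \right]
\end{equation*}
is uniformly bounded, so $BF(s)$ is finite for each $s$. Next I would prove the key non-expansiveness lemma for the generalized soft-max: for any probability measure $\nu$ on a measurable space, any nonzero $\lambda \in \mathbb{R}$, and any bounded functions $f,g$,
\begin{equation*}
\left| \tfrac{1}{\lambda} \log \expec_\nu[e^{\lambda f}] - \tfrac{1}{\lambda} \log \expec_\nu[e^{\lambda g}] \right| \le \|f-g\|_\infty.
\end{equation*}
This follows from the elementary bound $e^{\lambda(g - \|f-g\|_\infty)} \le e^{\lambda f} \le e^{\lambda(g + \|f-g\|_\infty)}$ when $\lambda > 0$ (and the reversed chain when $\lambda < 0$), after taking expectations, logarithms, and dividing by $\lambda$; the sign of $\lambda$ cancels out. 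The same estimate holds for the limit cases $\lambda \to 0$ (ordinary expectation) and $|\lambda| \to \infty$ (essential sup/inf).

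Then, for any $F, F' \in \mathbb{R}^{|\mathcal{S}|}$, I would chain these inequalities. Writing $g_F(a,s,\theta) := \expec_{\trantheta}[\rew + \gamma F(s')]$, one has $\|g_F - g_{F'}\|_\infty \le \gamma\|F-F'\|_\infty$ since expectation under $T_\theta$ is a convex combination. Applying the non-expansiveness lemma with $(\nu,\lambda) = (\mu(\cdot|a,s),\beta)$ to the map $g \mapsto \tfrac{1}{\beta}\log \expec_\mu[e^{\beta g}]$, and then once more with $(\nu,\lambda) = (\rho(\cdot|s),\alpha)$, yields
\begin{equation*}
\|BF - BF'\|_\infty \le \gamma \|F - F'\|_\infty.
\end{equation*}
Since $0 \le \gamma < 1$, $B$ is a contraction on the complete metric space $(\mathbb{R}^{|\mathcal{S}|},\|\cdot\|_\infty)$, and the Banach fixed-point theorem yields a unique fixed point, which by the derivation leading to \eqref{eq:nicerecursion} must coincide with $F^*$.

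The main obstacle is the double non-expansiveness step: one has to be careful that the inner soft-max with parameter $\beta$ (which may be negative or vanish in the limit) remains non-expansive, and that the outer soft-max with parameter $\alpha$ composes correctly, so that the only source of contraction is the discount factor $\gamma$ coming from the affine term $\rew + \gamma F(s')$. Once the sign-independent soft-max lemma is in place, the rest is a straightforward composition argument.
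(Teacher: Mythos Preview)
Your argument is correct, but it is not the route the paper takes. The paper proves Theorem~\ref{theo:fixedpoint} through two auxiliary propositions: first it fixes a policy--belief pair $(\pi,\psi)$ and shows that the associated affine operator $T_{\pi,\psi}F = g_{\pi,\psi} + \gamma P_{\pi,\psi} F$ has a unique fixed point $F_{\pi,\psi}$ by explicitly summing the geometric series; then it defines $F^* = \max_\pi \ext_\psi F_{\pi,\psi}$ and, by a case split on whether $\ext=\max$ or $\ext=\min$, sandwiches $F^*$ against $BF^*$ using monotonicity and induction to conclude that $F^*$ is the unique fixed point of $B$. In contrast, you bypass the per-policy analysis entirely: you prove a sign-independent non-expansiveness lemma for the soft-max functional $f \mapsto \tfrac{1}{\lambda}\log\expec_\nu[e^{\lambda f}]$, compose it twice (once for the inner $\beta$-layer, once for the outer $\alpha$-layer), pick up the single factor $\gamma$ from the linear term, and invoke Banach directly.

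Your route is shorter and handles positive and negative $\beta$ uniformly, with no case distinction. The paper's route has the side benefit of explicitly identifying the fixed point with the policy--belief value $\max_\pi\ext_\psi F_{\pi,\psi}$, which connects more transparently to the infinite-horizon objective~\eqref{eq:FEobjective}. It is worth noting that the paper's own proof of Theorem~\ref{theo:conv} essentially carries out the very contraction estimate you propose (the chain of inequalities there reduces $|F^*(s)-B^iF(s)|$ to $\gamma|F^*(s')-B^{i-1}F(s')|$ via two applications of $|\ext_x f - \ext_x g|\le\max_x|f-g|$), so your approach effectively merges the content of Theorems~\ref{theo:fixedpoint} and~\ref{theo:conv} into one Banach step. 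The only point you leave slightly implicit is the identification of the Banach fixed point with the $F^*$ of~\eqref{eq:FEobjective}; the paper is equally brief here, but a sentence noting that $F^*$ satisfies the recursion~\eqref{eq:recursion1} and hence must coincide with the unique fixed point would close the loop.
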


\begin{proof} 
Theorem~\ref{theo:fixedpoint} is proven through Proposition~\ref{theo:fixedpointprop} and~\ref{theo:fixedpointprop2} in the following. 
\end{proof}
\begin{proposition} \label{theo:fixedpointprop}
The mapping $T_{\pi,\psi}: \mathbb{R}^{|\mathcal{S}|} \rightarrow \mathbb{R}^{|\mathcal{S}|}$
\begin{multline}\label{eq:recursionpol}
T_{\pi,\psi}F (s) = \expec_{\pol} \Bigg[ -  \frac{1}{\alpha} \log \frac{\pol}{\pri} +\\
 \expec_{\bel} \bigg[  - \frac{1}{\beta} \log \frac{\bel}{\bayespost} + \\
  \expectedUtilityAlt \bigg]  \Bigg].
\end{multline}
converges to a unique solution for every policy-belief-pair $(\pi,\psi)$ independent of the initial free-energy vector $F(s)$.
\end{proposition}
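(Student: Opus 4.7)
My plan is to establish Proposition~\ref{theo:fixedpointprop} by recognizing $T_{\pi,\psi}$ as a $\gamma$-contraction on $\mathbb{R}^{|\mathcal{S}|}$ equipped with the supremum norm, and then invoking the Banach fixed-point theorem. This is essentially the standard policy-evaluation argument from discounted MDPs, adapted to the enlarged operator that averages additionally over the latent parameter $\theta$ and carries two constant KL offsets.

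The central observation is that, for a \emph{fixed} pair $(\pi,\psi)$, the two relative-entropy terms $\frac{1}{\alpha}\log\frac{\pol}{\pri}$ and $\frac{1}{\beta}\log\frac{\bel}{\bayespost}$ do not depend on the free-energy vector $F$. Consequently one can write $T_{\pi,\psi}F(s) = C_{\pi,\psi}(s) + \gamma \sum_{a} \pol \int_{\theta} \bel \sum_{s'} \trantheta F(s')\, d\theta$, where $C_{\pi,\psi}(s)$ collects the KL offsets together with the expected reward and is a fixed element of $\mathbb{R}^{|\mathcal{S}|}$ (finite by boundedness of $\rew$ and the implicit assumption that $\pi\ll\rho$, $\psi\ll\mu$ with finite relative entropies). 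Subtracting the expressions for two free-energy vectors $F_1,F_2$ therefore eliminates $C_{\pi,\psi}$, and the remaining difference $\gamma \sum_a \pol \int_\theta \bel \sum_{s'}\trantheta [F_1(s')-F_2(s')]\,d\theta$ is bounded in absolute value by $\gamma \|F_1-F_2\|_\infty$, since $\pol$, $\bel$, and $\trantheta$ are probability (density) distributions summing/integrating to one. Taking the sup over $s$ yields $\|T_{\pi,\psi}F_1 - T_{\pi,\psi}F_2\|_\infty \le \gamma \|F_1-F_2\|_\infty$.

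Because $0\le\gamma<1$ and $(\mathbb{R}^{|\mathcal{S}|},\|\cdot\|_\infty)$ is a complete metric space, the Banach fixed-point theorem delivers a unique $F_{\pi,\psi}^*$ with $T_{\pi,\psi}F_{\pi,\psi}^* = F_{\pi,\psi}^*$, and guarantees that $T_{\pi,\psi}^n F \to F_{\pi,\psi}^*$ geometrically at rate $\gamma^n$ from an arbitrary initialization $F$, which is exactly the statement of the proposition.

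The only genuinely non-routine issue, and therefore the main point to be careful about, is to verify that $T_{\pi,\psi}$ maps $\mathbb{R}^{|\mathcal{S}|}$ into itself rather than into the extended reals. This reduces to ensuring the two KL offsets are finite; one has to either restrict the admissible $(\pi,\psi)$ accordingly or note that infinite KL values make the outer $\max/\ext$ in \eqref{eq:recursion2} trivially exclude them, so that only $(\pi,\psi)$ with finite relative entropies are relevant when this proposition is later combined with Proposition~\ref{theo:fixedpointprop2}. Once this finiteness is in hand, the contraction argument above closes the proof with no further work.
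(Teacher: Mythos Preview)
Your proof is correct and rests on the same affine decomposition the paper uses: writing $T_{\pi,\psi}F = g_{\pi,\psi} + \gamma P_{\pi,\psi} F$, where $P_{\pi,\psi}(s,s') = \expec_{\pol}\expec_{\bel}[\trantheta]$ is a stochastic matrix and $g_{\pi,\psi}$ (your $C_{\pi,\psi}$) collects the reward and the two KL offsets. Where you diverge is in how you extract convergence and uniqueness from this decomposition. You observe that $P_{\pi,\psi}$ being stochastic makes $T_{\pi,\psi}$ a $\gamma$-contraction in $\|\cdot\|_\infty$ and invoke Banach; the paper instead expands the iterates explicitly as the Neumann series $T_{\pi,\psi}^i F = \sum_{t=0}^{i-1}\gamma^t P_{\pi,\psi}^t g_{\pi,\psi} + \gamma^i P_{\pi,\psi}^i F$, argues the second term vanishes, and then verifies by hand that the limit is a fixed point and that any other fixed point must coincide with it. Your route is shorter and delivers the geometric rate $\gamma^n$ automatically; the paper's route has the minor advantage of producing a closed-form expression for $F_{\pi,\psi}$. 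Your remark about needing finite KL offsets so that $C_{\pi,\psi}\in\mathbb{R}^{|\mathcal{S}|}$ is a point the paper glosses over, and it is reasonable to flag it.
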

\begin{proof}
 By introducing the matrix $P_{\pi,\psi}(s,s')$ and the vector $g_{\pi,\psi}(s)$ as
\begin{equation*}
P_{\pi,\psi}(s,s') := \mathbb{E}_{\pol} \bigg[ \mathbb{E}_{\psi(\theta|a,s)} \left[ T_\theta(s'|a,s)\right] \bigg] ,
\end{equation*}
\begin{equation*}
g_{\pi,\psi}(s) := \mathbb{E}_{\pol} \Bigg[ \mathbb{E}_{\psi(\theta|a,s)} \bigg[ \mathbb{E}_{T_\theta(s'|a,s)} \left[ \rew \right] - \frac{1}{\beta} \log\frac{\psi(\theta|a,s)}{\mu(\theta|a,s)} \bigg] - \frac{1}{\alpha} \log\frac{\pol}{\pri} \Bigg],
\end{equation*}
Equation~\eqref{eq:recursionpol} may be expressed in compact form: $T_{\pi,\psi}F = g_{\pi,\psi} + \gamma P_{\pi,\psi} F$. By applying the mapping $T_{\pi,\psi}$ an infinite number of times on an initial free-energy vector $F$, the free-energy vector $F_{\pi,\psi}$ of the policy-belief-pair $(\pi,\psi)$ is obtained:
\begin{equation*}
F_{\pi,\psi} := \lim_{i\rightarrow \infty} T_{\pi,\psi}^i F = \lim_{i\rightarrow \infty} \sum_{t=0}^{i-1} \gamma^t P_{\pi,\psi}^t g_{\pi,\psi} + \underbrace{\lim_{i\rightarrow \infty} \gamma^i P_{\pi,\psi}^i F}_{\rightarrow 0} ,
\end{equation*}
which does no longer depend on the initial $F$. It is straightforward to show that the quantity $F_{\pi,\psi}$ is a fixed point of the operator $T_{\pi,\psi}$:
\begin{align*}
T_{\pi,\psi}F_{\pi,\psi}  & =  g_{\pi,\psi} + \gamma P_{\pi,\psi} \lim_{i\rightarrow \infty} \sum_{t=0}^{i-1} \gamma^t P_{\pi,\psi}^t g_{\pi,\psi} \nonumber\\
& =  \gamma^0 P_{\pi,\psi}^0 g_{\pi,\psi} + \lim_{i\rightarrow \infty} \sum_{t=1}^{i} \gamma^t P_{\pi,\psi}^t g_{\pi,\psi} \nonumber\\
 & =  \lim_{i\rightarrow \infty} \sum_{t=0}^{i-1} \gamma^t P_{\pi,\psi}^t g_{\pi,\psi} + \underbrace{\lim_{i\rightarrow \infty} \gamma^i P_{\pi,\psi}^i g_{\pi,\psi}}_{\rightarrow 0} = F_{\pi,\psi}.
\end{align*}
Furthermore, $F_{\pi,\psi}$ is unique. Assume for this purpose an arbitrary fixed point $F'$ such that $T_{\pi,\psi}F' = F'$, then $F' = \lim_{i\rightarrow\infty}T_{\pi,\psi}^iF'=F_{\pi,\psi}$. 
\end{proof}
\begin{proposition} \label{theo:fixedpointprop2}
The optimal free-energy vector $F^*=\max_{\pi}\ext_{\psi}F_{\pi,\psi}$ is a unique fixed point of Bellman's equation $F^*=BF^*$.
\end{proposition}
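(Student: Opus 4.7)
The plan is to prove the proposition in two stages: first establish that the Bellman operator $B$ is a $\gamma$-contraction on $(\mathbb{R}^{|\mathcal{S}|}, \|\cdot\|_\infty)$, then identify its unique fixed point with $F^* = \max_\pi \ext_\psi F_{\pi,\psi}$.

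For the contraction I would work from the simplified form
\[
BF(s) = \tfrac{1}{\alpha}\log \expec_{\rho}\!\left[\expec_{\mu}\!\left[\exp\!\bigl(\beta\,\expec_{T_\theta}[\rew + \gamma F(s')]\bigr)\right]^{\alpha/\beta}\right].
\]
Two structural facts suffice. First, $B$ is monotone: each layer---the inner $\expec_{T_\theta}$, the soft-extremum $\tfrac{1}{\beta}\log\expec_{\mu}\exp(\beta\,\cdot)$, and the outer $\tfrac{1}{\alpha}\log\expec_{\rho}\exp(\alpha\,\cdot)$---is order-preserving irrespective of the signs of $\alpha$ and $\beta$. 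Second, $B$ is scaled translation-equivariant, $B(F + c\mathbf{1}) = BF + \gamma c$, because the constant $\gamma c$ passes through $\expec_{T_\theta}$ and commutes with every log-sum-exp layer. The standard trick of sandwiching $F_1$ between $F_2 \pm \|F_1-F_2\|_\infty \mathbf{1}$, then applying monotonicity and translation-equivariance, yields $\|BF_1 - BF_2\|_\infty \le \gamma\|F_1-F_2\|_\infty$. Banach's fixed-point theorem then delivers a unique $\tilde F$ with $B\tilde F = \tilde F$.

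To identify $\tilde F$ with $F^*$, I would evaluate the closed-form optimizers around equation \eqref{eq:policy_equilibrium} at $\tilde F$: the resulting $(\pi^*, \psi^*)$ attain the outer $\max_\pi$ and inner $\ext_\psi$ in $B\tilde F = \tilde F$, so $T_{\pi^*,\psi^*}\tilde F = \tilde F$. Uniqueness of fixed points in Proposition \ref{theo:fixedpointprop} gives $F_{\pi^*,\psi^*} = \tilde F$, so the value $\tilde F$ is attained. For the matching bound I introduce the auxiliary operator $U_\pi F := \ext_\psi T_{\pi,\psi} F$; the same monotonicity-plus-translation argument makes $U_\pi$ a $\gamma$-contraction whose unique fixed point is precisely $\ext_\psi F_{\pi,\psi}$ (achieved by the greedy $\psi$ relative to that fixed point). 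Since $B\tilde F = \max_\pi U_\pi \tilde F = \tilde F$, we have $U_\pi\tilde F \le \tilde F$ componentwise for every $\pi$; iterating $U_\pi$ and invoking its monotonicity gives $\ext_\psi F_{\pi,\psi} \le \tilde F$. Taking $\max_\pi$ and combining with the attainment above concludes $\tilde F = F^*$.

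The main obstacle will be the $\beta<0$ branch of the identification step: when $\ext = \min$, a pointwise inequality $T_{\pi,\psi}\tilde F \le \tilde F$ is not available for arbitrary $\psi$, and one genuinely needs the intermediate fixed-point characterization through $U_\pi$ to push the argument through. The contraction proof itself is routine once one recognizes that monotonicity together with scaled translation-equivariance is the right abstraction, allowing a single unified argument that avoids case splits on the signs of $\alpha$ and $\beta$.
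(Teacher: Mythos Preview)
Your proposal is correct and takes a genuinely different route from the paper. The paper never proves that $B$ is a contraction in order to establish Proposition~\ref{theo:fixedpointprop2}; instead it works directly with the family $T_{\pi,\psi}$ and proceeds by monotone induction, splitting into the cases $\ext=\max$ and $\ext=\min$. In the $\max$ case it sandwiches $F^*$ between $T_{\pi^*,\psi^*}F^*$ and $F_{\pi',\psi'}$ for the greedy pair $(\pi',\psi')$; in the $\min$ case it exploits the structural fact that the inner optimization over $\psi$ is independent of $\pi$ and runs two nested induction arguments. Uniqueness is then argued separately by comparing two putative fixed points. Your approach, by contrast, packages monotonicity and the translation law $B(F+c\mathbf{1})=BF+\gamma c\mathbf{1}$ into a single $\gamma$-contraction statement, gets existence and uniqueness of $\tilde F$ from Banach, and then identifies $\tilde F$ with $F^*$ through the auxiliary contractions $U_\pi$. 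This is cleaner, avoids the case split for the contraction part, and essentially yields Theorem~\ref{theo:conv} as a corollary rather than as a separate argument; the paper's approach is more elementary in that it avoids invoking Banach but pays for it with more bookkeeping.

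One small point worth tightening in your write-up: in the $\beta<0$ branch the statement ``$F_{\pi^*,\psi^*}=\tilde F$, so the value $\tilde F$ is attained'' is not by itself enough to close the sandwich, since $F_{\pi^*,\psi^*}\ge \min_\psi F_{\pi^*,\psi}$ points the wrong way. What actually closes it is the observation you already have available, namely that $U_{\pi^*}\tilde F=\tilde F$ forces $\tilde F=\ext_\psi F_{\pi^*,\psi}$ (as the unique fixed point of $U_{\pi^*}$), whence $\tilde F\le \max_\pi \ext_\psi F_{\pi,\psi}=F^*$. You flag precisely this subtlety in your last paragraph, so the ingredients are all present; just make the attainment step invoke $U_{\pi^*}$ rather than $T_{\pi^*,\psi^*}$.
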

\begin{proof} The proof consists of two parts where we assume $\ext = \max$ in the first part and $\ext = \min$ in the second part respectively. Let $\ext = \max$ and $F^*=F_{\pi^*,\psi^*}$, where $(\pi^*,\psi^*)$ denotes the optimal policy-belief-pair. Then
\begin{equation*}
F^* = T_{\pi^*,\psi^*} F^* \leq \underbrace{\max_\pi \max_\psi T_{\pi,\psi} F^*}_{=BF^*} =: T_{\pi',\psi'} F^* \stackrel{\text{Induction}}{\leq} F_{\pi',\psi'},
\end{equation*}
where the last inequality can be straightforwardly proven by induction and exploiting the fact that $P_{\pi,\psi}(s,s') \in [0;1]$. But by definition $F^* = \max_{\pi}\max_{\psi}F_{\pi,\psi} \geq F_{\pi',\psi'}$, hence $F^* = F_{\pi',\psi'}$ and therefore $F^*=BF^*$. Furthermore, $F^*$ is unique. Assume for this purpose an arbitrary fixed point $F'=F_{\pi',\psi'}$ such that $F'=BF'$ with the corresponding policy-belief-pair $(\pi',\psi')$. Then 
\begin{equation*}
F^* = T_{\pi^*,\psi^*} F^* \geq T_{\pi',\psi'} F^* \stackrel{\text{Induction}}{\geq} F_{\pi',\psi'} = F',
\end{equation*}
and similarly $F' \geq F^*$, hence $F' = F^*$.

Let $\ext = \min$ and $F^*=F_{\pi^*,\psi^*}$. By taking a closer look at Equation~\eqref{eq:recursion2}, it can be seen that the optimization over $\psi$ does not depend on $\pi$. Then
\begin{equation*}
F^* = T_{\pi^*,\psi^*} F^* \geq \min_{\psi} T_{\pi^*,\psi} F^* =: T_{\pi^*,\psi'} F^* \stackrel{\text{Induction}}{\geq} F_{\pi^*,\psi'}.
\end{equation*}
But by definition $F^*=\min_{\psi}F_{\pi^*,\psi} \leq F_{\pi^*,\psi'}$, hence $F^*=F_{\pi^*,\psi'}$. Therefore it holds that $BF^* = \max_\pi \min_\psi T_{\pi,\psi} F^* = \max_\pi T_{\pi,\psi^*} F^*$ and similar to the first part of the proof we obtain
\begin{equation*}
F^* = T_{\pi^*,\psi^*} F^* \leq \underbrace{\max_\pi T_{\pi,\psi^*} F^*}_{= BF^*} =: T_{\pi',\psi*} F^* \stackrel{\text{Induction}}{\leq} F_{\pi',\psi*}.
\end{equation*}
But by definition $F^* = \max_{\pi}F_{\pi,\psi^*} \geq F_{\pi',\psi*}$, hence $F^*=F_{\pi',\psi*}$ and therefore $F^*=BF^*$. Furthermore, $F_{\pi^*, \psi^*}$ is unique. Assume for this purpose an arbitrary fixed point $F'=F_{\pi',\psi'}$ such that $F'=BF'$. Then
\begin{equation*}
F' = T_{\pi',\psi'} F' \leq T_{\pi',\psi^*} F' \stackrel{\text{Induction}}{\leq} F_{\pi', \psi^*} \stackrel{\text{Induction}}{\leq} T_{\pi',\psi^*} F^* \leq T_{\pi^*,\psi^*} F^* = F^*,
\end{equation*}
and similarly $F^* \leq F'$, hence $F^* = F'$. 
\end{proof}

\begin{theorem}\label{theo:conv}
Let $\epsilon$ be a positive number satisfying $\epsilon<\frac{\eta}{1-\gamma}$ where $\gamma \in (0;1)$ is the discount factor and where $u$ and $l$ are the bounds of the reward function $\rew$ such that $l \leq \rew \leq u$ and $\eta=\max\{|u|,|l|\}$. Suppose that the value iteration scheme from Algorithm~\ref{alg:FEiteration} is run for $i=\lceil \log_\gamma\frac{\epsilon(1-\gamma)}{\eta} \rceil$ iterations with an initial free-energy vector $F(s)=0$ for all $s$. Then, it holds that $\max_s |F^*(s) - B^iF(s)| \leq \epsilon$, where $F^*$ refers to the unique fixed point from Theorem~\ref{theo:fixedpoint}.
\end{theorem}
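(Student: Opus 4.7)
The plan is to show that the Bellman operator $B$ is a $\gamma$-contraction in the sup-norm on $\mathbb{R}^{|\mathcal{S}|}$; combined with uniqueness of $F^*$ from Theorem~\ref{theo:fixedpoint}, this yields $\|B^iF - F^*\|_\infty \le \gamma^i\|F - F^*\|_\infty$, and an a priori bound $\|F^*\|_\infty \le \eta/(1-\gamma)$ applied to the initialization $F\equiv 0$ produces the explicit iteration count.

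The key technical step is the contraction estimate. Using the simplified form $BF(s) = \tfrac{1}{\alpha}\log \mathbb{E}_{\pri}\big[(\mathbb{E}_{\bayespost}[\exp(\beta h_F(a,\theta,s))])^{\alpha/\beta}\big]$ with $h_F(a,\theta,s) := \mathbb{E}_{\trantheta}[\rew + \gamma F(s')]$, I would introduce the generalized log-sum-exp operator $\Lambda_{p,\lambda}(g) := \tfrac{1}{\lambda}\log \mathbb{E}_p[\exp(\lambda g)]$ and observe that for \emph{either} sign of nonzero $\lambda$ this operator is (i) shift-equivariant, $\Lambda_{p,\lambda}(g+c)=\Lambda_{p,\lambda}(g)+c$ for constant $c$, and (ii) monotone in $g$ (when $\lambda<0$ the sign reversal from $\exp$ is cancelled by the negative prefactor $1/\lambda$). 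Together, (i) and (ii) make $\Lambda_{p,\lambda}$ 1-Lipschitz in sup-norm: if $|g-g'|\le c$ pointwise, then $g-c\le g'\le g+c$ and hence $\Lambda(g)-c\le \Lambda(g')\le \Lambda(g)+c$. Since $BF = \Lambda_{\pri,\alpha}\big(\Lambda_{\bayespost,\beta}(h_F)\big)$, and $\|h_F - h_{F'}\|_\infty \le \gamma\|F-F'\|_\infty$ because $h_F - h_{F'} = \gamma\mathbb{E}_{\trantheta}[F(s')-F'(s')]$, chaining the two non-expansive maps yields $\|BF - BF'\|_\infty \le \gamma\|F-F'\|_\infty$.

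For the bound on $F^*$, the same calculus applied to $F^* = BF^*$ gives $|h_{F^*}| \le \eta + \gamma\|F^*\|_\infty$ pointwise; since $\Lambda_{p,\lambda}$ preserves constants, the image of $h_{F^*}$ under the two nested $\Lambda$ operators is also bounded in absolute value by $\eta + \gamma\|F^*\|_\infty$, yielding $\|F^*\|_\infty \le \eta + \gamma\|F^*\|_\infty$ and thus $\|F^*\|_\infty \le \eta/(1-\gamma)$. Initializing $F\equiv 0$ and iterating the contraction, $\|B^iF - F^*\|_\infty \le \gamma^i\eta/(1-\gamma) \le \epsilon$ is equivalent to $i \ge \log_\gamma(\epsilon(1-\gamma)/\eta)$, where the inequality flips because $\log\gamma<0$; the condition $\epsilon < \eta/(1-\gamma)$ ensures the argument of $\log_\gamma$ lies in $(0,1)$, so the ceiling is a well-defined positive integer matching the stated count. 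The main subtlety I anticipate is keeping the sign conventions of $\beta$ straight --- both because $\ext$ flips between $\max$ and $\min$ according to $\mathrm{sign}(\beta)$ and because the derivation of $\Lambda_{p,\lambda}$ differs slightly for negative $\lambda$ --- but the shift-equivariance and monotonicity needed here are robust to both signs, so the argument goes through uniformly in $\alpha$ and $\beta$.
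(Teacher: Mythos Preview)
Your proposal is correct and follows the same overall strategy as the paper: establish that $B$ is a $\gamma$-contraction in the sup-norm, bound $\|F^*\|_\infty$ by $\eta/(1-\gamma)$, and read off the iteration count from $\gamma^i\eta/(1-\gamma)\le\epsilon$.

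The only difference is in how the non-expansiveness of the two ``soft'' operators is argued. You work with the closed form $BF=\Lambda_{\rho,\alpha}\circ\Lambda_{\mu,\beta}\circ h_F$ and use shift-equivariance plus monotonicity of $\Lambda_{p,\lambda}$ (valid for either sign of $\lambda$) to get the 1-Lipschitz property directly. The paper instead keeps the variational representation \eqref{eq:felix}--\eqref{eq:Z2var} with explicit $\max_\pi$ and $\ext_\psi$ and invokes the elementary inequality $|\ext_x f(x)-\ext_x g(x)|\le \max_x|f(x)-g(x)|$ twice, once at the $\pi$-level and once at the $\psi$-level, before bounding the inner expectation by $\gamma\max_s|F^*(s)-B^{i-1}F(s)|$. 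The two devices are essentially dual to one another---your shift-equivariance/monotonicity argument is exactly what proves the ext-inequality the paper quotes---so neither buys anything the other does not; your route is marginally more compact and sidesteps the need to track the $\max$/$\min$ cases for $\ext$ separately.
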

\begin{proof}
We start the proof by showing that the $L_\infty$-norm of the difference vector between the optimal free-energy $F^*$ and $B^iF$ exponentially decreases with the number of iterations $i$:
\begin{align*}
   \max_s & \left|F^*(s) - B^iF(s)\right| =: \left|F^*(s^*) - B^iF(s^*)\right| \nonumber\\
&  \stackrel{\text{Eq. } \eqref{eq:felix}}{=} \left| \max_\pi \mathbb{E}_{\pi(a|s^*)} \bigg[ \frac{1}{\beta} \log Z_\beta(a,s^*) - \frac{1}{\alpha} \log \frac{\pi(a|s^*)}{\rho(a|s^*)}\bigg] \right. \nonumber\\
& \left. - \max_\pi \mathbb{E}_{\pi(a|s^*)} \bigg[ \frac{1}{\beta} \log Z^i_\beta(a,s^*) - \frac{1}{\alpha} \log \frac{\pi(a|s^*)}{\rho(a|s^*)}\bigg] \right| \nonumber\\
& \leq \max_\pi \left| \mathbb{E}_{\pi(a|s^*)} \bigg[ \frac{1}{\beta} \log Z_\beta(a,s^*) - \frac{1}{\beta} \log Z^i_\beta(a,s^*) \bigg] \right| \nonumber\\
& \leq\max_a \left|  \frac{1}{\beta} \log Z_\beta(a,s^*) - \frac{1}{\beta} \log Z^i_\beta(a,s^*) \right| \nonumber\\
& =: \left|  \frac{1}{\beta} \log Z_\beta(a^*,s^*) - \frac{1}{\beta} \log Z^i_\beta(a^*,s^*) \right| \nonumber \\
& \stackrel{\text{Eq. } \eqref{eq:Z2var}}{=} \left| \ext_\psi \mathbb{E}_{\psi(\theta|a^*,s^*)} \bigg[ \mathbb{E}_{T_\theta(s'|a^*,s^*)} \big[ \rew + \gamma F^*(s') \big] -\frac{1}{\beta} \log \frac{\psi(\theta|a^*,s^*)}{\mu(\theta|a^*,s^*)}  \bigg] \right. \nonumber \\
& - \left. \ext_\psi \mathbb{E}_{\psi(\theta|a^*,s^*)} \bigg[ \mathbb{E}_{T_\theta(s'|a^*,s^*)} \big[ \rew + \gamma B^{i-1}F(s') \big] -\frac{1}{\beta} \log \frac{\psi(\theta|a^*,s^*)}{\mu(\theta|a^*,s^*)}  \bigg] \right| \nonumber \\
& \leq \max_\psi \left| \mathbb{E}_{\psi(\theta|a^*,s^*)} \bigg[ \mathbb{E}_{T_\theta(s'|a^*,s^*)} \big[ \gamma F^*(s') -  \gamma B^{i-1}F(s')\big] \bigg] \right| \nonumber\\
& \leq \gamma \max_s \left| F^*(s) - B^{i-1}F(s) \right| \stackrel{\text{Recur.}}{\leq} \gamma^i \max_s \left| F^*(s) - F(s) \right| \leq \gamma^i \frac{\eta}{1-\gamma} \nonumber,
\end{align*}
where we exploit the fact that $\left| \ext_xf(x) - \ext_xg(x) \right| \leq \max_x \left|f(x) - g(x) \right|$ and that the free-energy is bounded through the reward bounds $l$ and $u$ with  $\eta=\max\{|u|,|l|\}$. For a convergence criterion $\epsilon>0$ such that $\epsilon \geq \gamma^i \frac{\eta}{1-\gamma}$, it then holds that $i \geq \log_\gamma \frac{\epsilon(1-\gamma)}{\eta}$ presupposing that $\epsilon < \frac{\eta}{1-\gamma}$.
\end{proof}

\section{Experiments: Grid World}
This section illustrates the proposed value iteration scheme with an intuitive example where an agent has to navigate through a grid-world. The agent starts at  position $\textbf{S} \in \mathcal{S}$ with the objective to reach the goal state $\textbf{G} \in \mathcal{S}$ and  can choose one out of maximally four possible actions $a \in \lbrace \uparrow, \rightarrow, \downarrow, \leftarrow \rbrace $ in each time-step. Along the way, the agent can encounter regular tiles (actions move the agent deterministically one step in the desired direction), walls that are represented as \emph{gray tiles} (actions that move  the agent towards the wall are not possible), holes that are represented as \emph{black tiles} (moving into the hole causes a negative reward) and \emph{chance tiles} that are illustrated as white tiles with a question mark (the transition probabilities of the chance tiles are unknown to the agent). Reaching the goal $\textbf{G}$ yields a reward $R=+1$ whereas stepping into a hole results in a negative reward $R= -1$. In both cases the agent is subsequently teleported back to the starting position $\textbf{S}$. Transitions to regular tiles have a small negative reward of $R= -0.01$. When stepping onto a chance tile, the agent is pushed stochastically to an adjacent tile giving a reward as mentioned above. 
The true state-transition probabilities of the chance tiles are not known by the agent, but the agent holds the Bayesian belief 
\[
\mu (\thetabold_{s,a} |a,s) = \mathsf{Dirichlet} \big( \Phi_{s,a}^{s_1'}, \dots, \Phi_{s,a}^{s_{N(s)}'}\big) = \prod_{i=1}^{N(s)} (\theta_{s,a}^{s_i'})^{\Phi_{s,a}^{s_i'} - 1}
\]
where transition model is denoted as $T_{\thetabold_{s,a}} (s'|s,a)= \theta_{s,a}^{s'}$
and $\thetabold_{s,a} = \big(\theta_{s,a}^{s_1'} \dots \theta_{s,a}^{s_{N(s)}'} \big)$ and $N(s)$ is the number of possible actions in state $s$. The data  is incorporated into the model as a count vector $\big( \Phi_{s,a}^{s_1'}, \dots, \Phi_{s,a}^{s_{N(s)}'}\big) $ where $\Phi_{s,a}^{s' }$ represents the number of times that the transition $(s,a,s')$ has occurred. The prior $\pri$ for the actions at every state is set to be uniform. An important aspect of the model is that in the case of unlimited observational data, the agent will plan with the correct transition probabilities.

%%%%%%%%%%%%%%%%%%%%%%%%%%%%%%%%%
We conducted two experiments with discount factor $\gamma=0.9$ and uniform priors $\pri$ for the action variables. In the first experiment, we explore and illustrate the agent's  planning behavior under different degrees of computational limitations (by varying $\alpha$) and under different model uncertainty attitudes (by varying $\beta$) with fixed uniform beliefs $\bayespost$. 
%---that is, the agent uses a flat belief over the transition probabilities $\bayespost$ in chance nodes and \emph{does not} update this belief through interaction with the environment. 
In the second experiment, the agent is allowed to update its beliefs $\bayespost$ and use the updated model to re-plan its strategy.

\subsection{The Role of the Parameters $\alpha$ and $\beta$ on Planning}

Figure \ref{fig:results1} shows the solution to the variational free energy problem that is obtained by iteration until convergence according to Algorithm \ref{alg:FEiteration}  under different values of $\alpha$ and $\beta$. In particular, the first row shows the free energy function $F^*(s)$ (Eq.~\eqref{eq:nicerecursion}). The second, third and fourth row show heat maps of the position of an agent that follows the optimal policy (Eq.~\eqref{eq:policy_equilibrium}) according to the agent's biased beliefs (plan) and to the actual transition probabilities in a friendly and  unfriendly environment, respectively. In chance tiles, the most likely transitions in these two environments are indicated by arrows where the agent is teleported with a probability of $0.999$ into the tile indicated by the arrow and with a probability of $0.001$ to a random other adjacent tile.

\begin{figure}
\includegraphics[trim=25 0 0 0, scale=0.39]{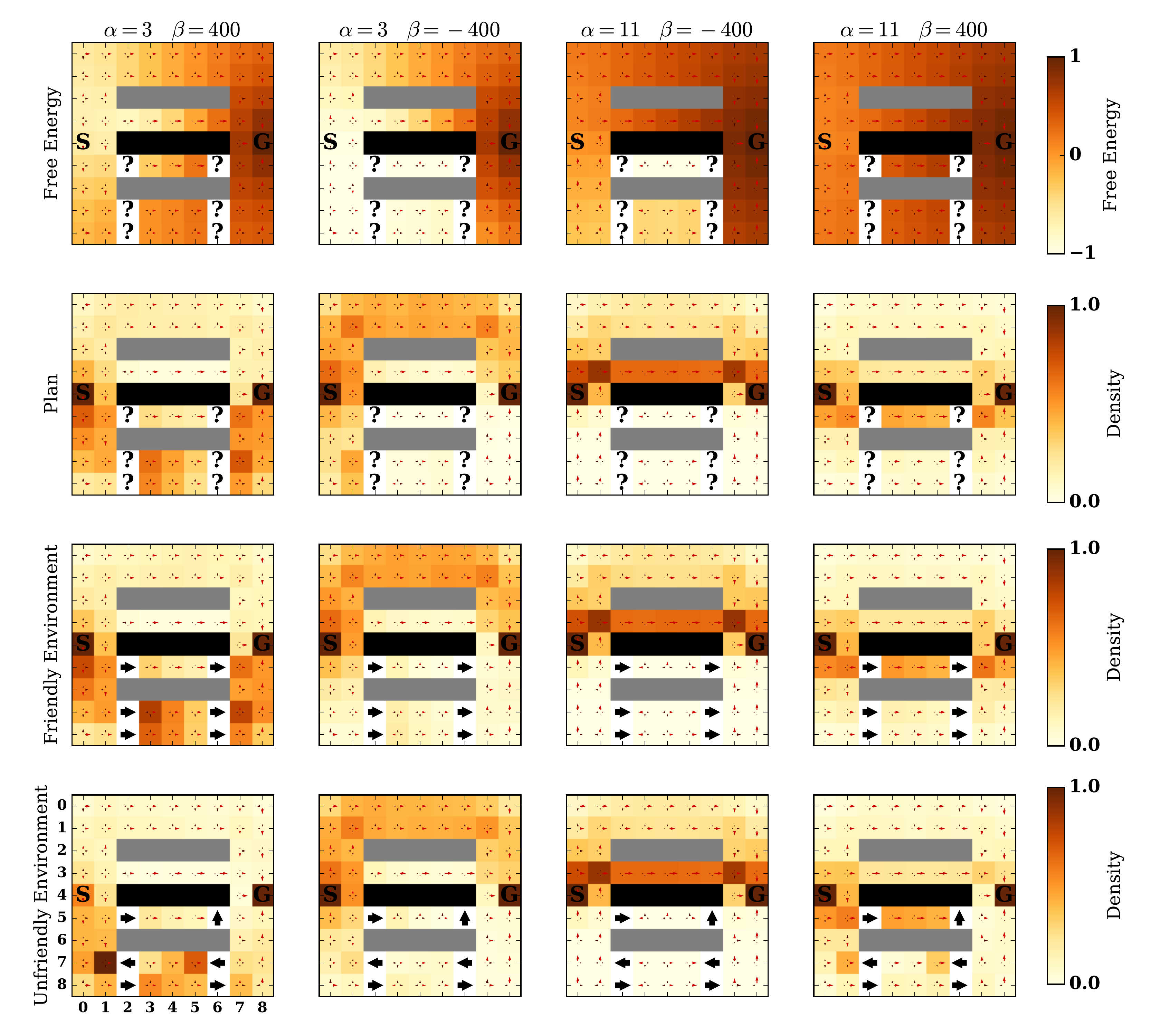}
%\vspace{1in}
\caption{The four different rows show free energy values and heat-maps of planned trajectories according to the agent's beliefs over state-transitions in chance tiles, heat-maps of real trajectories in a friendly environment and in an unfriendly environment respectively. The Start-position is indicated by $\textbf{S}$ and the goal state is indicated by $\textbf{G}$. Black tiles represent holes with negative reward, gray tiles represent walls and chance tiles with a question mark have transition probabilities unknown to the agent. The white tiles with an arrow represent the most probable state-transition in chance tiles (as specified by the environment). Very small arrows in each cell encode the policy $\pol$ (the length of each arrow encodes the probability of the corresponding action under the policy, highest probability action is indicated as a red arrow). The heat map is constructed by normalizing the number of visits for each state over $20000$ steps, where actions are sampled from the agent's policy and state-transitions are sampled according to one of three ways: the second row according to the agent's belief over state-transitions $\bel$, in the third and fourth row according to the actual transition probabilities of a friendly and an unfriendly environment respectively. Different columns show different $\alpha$ and $\beta$ cases.
}\label{fig:results1}
\end{figure}

In the first column of Fig. \ref{fig:results1} it can be seen that a stochastic agent ($\alpha = 3.0$) with high model uncertainty and optimistic attitude ($\beta=400$) has a strong preference for the broad corridor in the bottom by assuming favorable transitions for the unknown chance tiles.
This way the agent also avoids the narrow corridors that are unsafe due to the stochasticity of the low-$\alpha$ policy. In the second column of Fig. \ref{fig:results1} with low $\alpha=3$ and high model uncertainty with pessimistic attitude $\beta =-400$, the agent strongly prefers the upper broad corridor because unfavorable transitions are assumed for the chance tiles.
The third column of Fig. \ref{fig:results1} shows a very pessimistic agent ($\beta=-400$) with high precision ($\alpha=11$) that allow the agent to safely choose the shortest distance by selecting the upper narrow corridor without risking any tiles with unknown transitions. The fourth column of Fig. \ref{fig:results1} shows a very optimistic agent ($\beta=400$) with high precision. In this case the agent chooses the shortest distance by selecting the bottom narrow corridor that includes two chance tiles with unknown transition.

\subsection{Updating the Bayesian Posterior $\mu$ with Observations from the Environment}

Similar to model identification adaptive controllers that perform system identification while the system is running \cite{aastrom2013adaptive}, we can use the proposed planning algorithm also in a reinforcement learning setup by updating the Bayesian beliefs about the MDP while executing always the first action and replanning in the next time step.
During the learning phase, the exploration is governed by both factors $\alpha$  and $\beta$, but each factor has a different influence. In particular, lower $\alpha$-values will cause more exploration due to the inherent stochasticity in the agent's action selection, similar to an $\epsilon$-greedy policy. If $\alpha$ is kept fixed through time, this will of course also imply a ``suboptimal'' (i.e. bounded optimal) policy in the long run. In contrast, the parameter $\beta$ governs exploration of states with unknown transition-probabilities more directly and will not have an impact on the agent's performance in the limit, where sufficient data has eliminated model uncertainty. 
We illustrate this with simulations in a grid-world environment where the agent is allowed to update its beliefs $\bayespost$ over the state-transitions every time it enters a chance tile  and receives observation data acquired through interaction with the environment---compare left panels in Figure \ref{fig:results2}. In each step, the agent can then use the updated belief-models for planning the next action.

\begin{figure}
\center
\includegraphics[scale=0.20]{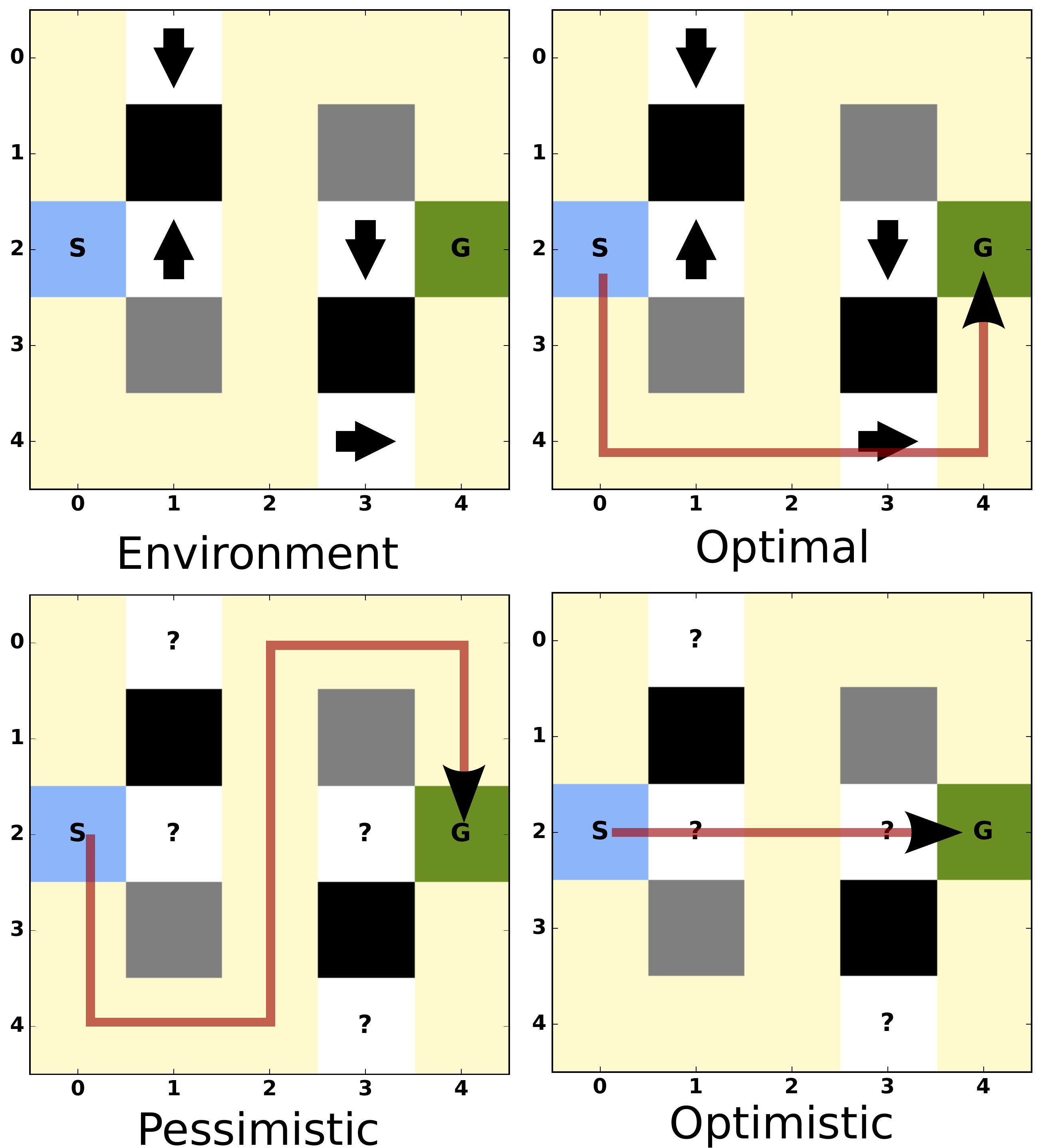}
\includegraphics[scale=0.22]{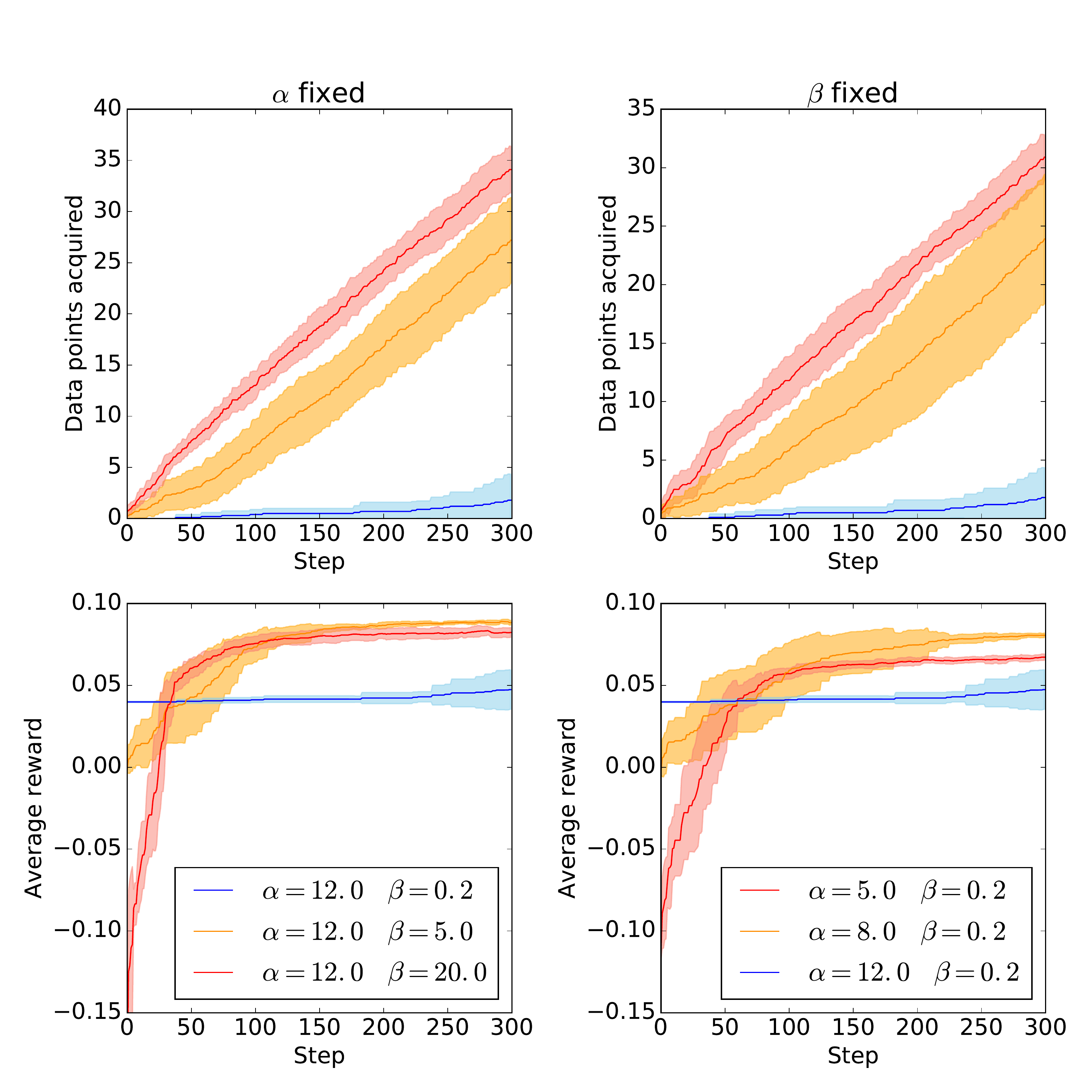}
%\vspace{1in}
\caption{The effect of $\alpha$ and $\beta$ when updating beliefs over $300$ interaction steps with the environment. The four panels on the left show the grid-world environment and the pertaining optimal policy if the environment is known. The lower left panels show paths that the agent could take depending on its attitude towards model uncertainty. The panels on the right show the number of acquired data points, that is the number of times a chance tile is entered, and the average reward (bottom panels) for fixed $\alpha $ (varying $\beta$) or fixed $\beta $ (varying $\alpha$). The average reward at each step is computed as follows. Each time the agent observes a state-transition in a chance tile and updates its belief model, $10$ runs of length $2000$ steps  are sampled (using the agent's current belief model).  The average reward (bold lines) and standard-deviation (shaded areas) across these $10$ runs  are shown in the figure.}\label{fig:results2}
\end{figure}

Figure \ref{fig:results2} (right panels) shows the number of data points acquired (each time a chance tile is visited) and the average reward depending on the number of steps that the agent has interacted with the environment.  The panels show several different cases: while keeping $\alpha=12.0$ fixed we test $\beta=(0.2, 5.0, 20.0)$ and while keeping $\beta=0.2$ fixed we test $\alpha=(5.0, 8.0, 12.0)$.
It can be seen that lower $\alpha$ leads to better exploration, but it can also lead to lower performance in the long run---see for example rightmost bottom panel. In contrast, optimistic $\beta$ values can also induce high levels of exploration with the added advantage that in the limit no performance detriment is introduced. However, high $\beta$ values can in general also lead to a detrimental persistence with bad policies, as can be seen for example in the superiority of the low-$\beta$ agent at the very beginning of the learning process.

\section{Discussion and Conclusions}

%%Summary of results
In this paper we are bringing two strands of research together, namely research on information-theoretic principles of control and decision-making and robustness principles for planning under model uncertainty. We have devised a unified  recursion principle that extends previous generalizations of Bellman's optimality equation and we have shown how to solve this recursion with an iterative scheme that is guaranteed to converge to a unique optimum. In simulations we could demonstrate how such a combination of information-theoretic policy and belief constraints that reflect model uncertainty can be beneficial for agents that act in partially unknown environments.

Most of the research on robust MDPs does not consider information-processing constraints on the policy, but only considers the uncertainty in the transition probabilities by specifying a set of permissible models  such that worst-case scenarios can be computed in order to obtain a robust policy \cite{nilim2005robust,iyengar2005robust}.  Recent extensions of these approaches include more general assumptions regarding the set properties of the permissible models and assumptions regarding the data generation process \cite{wiesemann2013robust}. Our approach falls inside this class of robustness methods that use a restricted set of permissible models, because we extremize the biased belief $\bel$ under the constraint that it has to be within some information bounds measured by the Kullback-Leibler divergence from a reference Bayesian posterior. Contrary to these previous methods, our approach additionally considers robustness arising from the stochasticity in the policy. 

Information-processing constraints on the policy in MDPs have been previously considered in a number of studies \cite{todorov2009efficient,kappen2005linear,peters2010relative,rubin2012trading}, however not in the context of model uncertainty. In these studies a free energy value recursion is derived when restricting the class of policies through the Kullback-Leibler divergence and when disregarding separate information-processing constraints on observations. However, a small number of studies has considered information-processing constraints both for actions and observations. For example, Polani and Tishby \cite{tishby2011information} and  Ortega and Braun \cite{ortega2013thermodynamics}  combine both kinds of information costs. The first cost formalizes an information-processing cost in the policy and the second cost constrains uncertainty arising from the state transitions directly (but crucially not the uncertainty in the latent variables). In both information-processing constraints the cost is determined as a Kullback-Leibler divergence with respect to a reference distribution. 
Specifically, the reference distribution in \cite{tishby2011information} is given by the marginal distributions (which is equivalent to a rate distortion problem) and in \cite{ortega2013thermodynamics} is given by fixed priors. The Kullback-Leibler divergence costs for the observations in these cases essentially correspond to a risk-sensitive objective. 
While there is a relation between risk-sensitive and robust MDPs \cite{shen2014risk,osogami2012robustness,chow2015risk}, the innovation in our approach is at least twofold. First, it allows combining information-processing constraints on the policy with model uncertainty (as formalized by a latent variable). Second, it provides a natural setup to study learning.

The algorithm presented here and Bayesian models in general  \cite{duff2002optimal} are computationally expensive as they have to   compute possibly high-dimensional integrals depending on the number of allowed transitions for action-state pairs. However, there have been tremendous efforts in solving unknown MDPs efficiently, especially by sampling methods \cite{ross2011bayesian,guez2012efficient,guez2013scalable}. An interesting future direction to extend our methodology would therefore be to develop a sampling-based version of Algorithm~\ref{alg:FEiteration} to increase the range of applicability and scalability \cite{ortega2014monte}. Moreover, such sampling methods might allow for reinforcement learning applications, for example by estimating free energies through TD-learning \cite{fox2015g}, or by Thompson sampling approaches \cite{ortega2010minimum,ortega2010bayesian} or other stochastic methods for adaptive control \cite{aastrom2013adaptive}.

\subsubsection{Acknowledgments}
This study was supported by the DFG, Emmy Noether grant BR4164/1-1. The code was developed on top of the RLPy library~\cite{geramifard2015rlpy}.
% ---------------------------------

\bibliographystyle{unsrt}
\bibliography{bibliographyMDP}

\begin{thebibliography}{10}

\bibitem{Bellman:1957}
Richard Bellman.
\newblock {\em Dynamic Programming}.
\newblock Princeton University Press, Princeton, NJ, USA, 1 edition, 1957.

\bibitem{todorov2006linearly}
Emanuel Todorov.
\newblock Linearly-solvable markov decision problems.
\newblock In {\em Advances in neural information processing systems}, pages
  1369--1376, 2006.

\bibitem{todorov2009efficient}
Emanuel Todorov.
\newblock Efficient computation of optimal actions.
\newblock {\em Proceedings of the national academy of sciences},
  106(28):11478--11483, 2009.

\bibitem{braun2011path}
Daniel~A Braun, Pedro~A Ortega, Evangelos Theodorou, and Stefan Schaal.
\newblock Path integral control and bounded rationality.
\newblock In {\em Adaptive Dynamic Programming And Reinforcement Learning
  (ADPRL), 2011 IEEE Symposium on}, pages 202--209. IEEE, 2011.

\bibitem{Broek2010RiskSP}
Bart van~den Broek, Wim Wiegerinck, and Hilbert~J. Kappen.
\newblock Risk sensitive path integral control.
\newblock In {\em UAI}, 2010.

\bibitem{ortega2013thermodynamics}
Pedro~A Ortega and Daniel~A Braun.
\newblock Thermodynamics as a theory of decision-making with
  information-processing costs.
\newblock In {\em Proc. R. Soc. A}, volume 469, page 20120683. The Royal
  Society, 2013.

\bibitem{ortega2014generalized}
Pedro~A Ortega and Daniel~A Braun.
\newblock Generalized thompson sampling for sequential decision-making and
  causal inference.
\newblock {\em Complex Adaptive Systems Modeling}, 2(1):2, 2014.

\bibitem{tishby2011information}
Naftali Tishby and Daniel Polani.
\newblock Information theory of decisions and actions.
\newblock In {\em Perception-action cycle}, pages 601--636. Springer, 2011.

\bibitem{duff2002optimal}
Michael~O'Gordon Duff.
\newblock {\em Optimal Learning: Computational procedures for Bayes-adaptive
  Markov decision processes}.
\newblock PhD thesis, University of Massachusetts Amherst, 2002.

\bibitem{mannor2007bias}
Shie Mannor, Duncan Simester, Peng Sun, and John~N Tsitsiklis.
\newblock Bias and variance approximation in value function estimates.
\newblock {\em Management Science}, 53(2):308--322, 2007.

\bibitem{nilim2005robust}
Arnab Nilim and Laurent El~Ghaoui.
\newblock Robust control of markov decision processes with uncertain transition
  matrices.
\newblock {\em Operations Research}, 53(5):780--798, 2005.

\bibitem{iyengar2005robust}
Garud~N Iyengar.
\newblock Robust dynamic programming.
\newblock {\em Mathematics of Operations Research}, 30(2):257--280, 2005.

\bibitem{wiesemann2013robust}
Wolfram Wiesemann, Daniel Kuhn, and Ber{\c{c}} Rustem.
\newblock Robust markov decision processes.
\newblock {\em Mathematics of Operations Research}, 38(1):153--183, 2013.

\bibitem{szita2008many}
Istv{\'a}n Szita and Andr{\'a}s L{\H{o}}rincz.
\newblock The many faces of optimism: a unifying approach.
\newblock In {\em Proceedings of the 25th international conference on Machine
  learning}, pages 1048--1055. ACM, 2008.

\bibitem{szita2010model}
Istv{\'a}n Szita and Csaba Szepesv{\'a}ri.
\newblock Model-based reinforcement learning with nearly tight exploration
  complexity bounds.
\newblock In {\em Proceedings of the 27th International Conference on Machine
  Learning (ICML-10)}, pages 1031--1038, 2010.

\bibitem{hansen2008robustness}
Lars~Peter Hansen and Thomas~J Sargent.
\newblock {\em Robustness}.
\newblock Princeton university press, 2008.

\bibitem{rubin2012trading}
Jonathan Rubin, Ohad Shamir, and Naftali Tishby.
\newblock Trading value and information in mdps.
\newblock In {\em Decision Making with Imperfect Decision Makers}, pages
  57--74. Springer, 2012.

\bibitem{bertsekas1996neuro}
DP~Bertsekas and JN~Tsitsiklis.
\newblock Neuro-dynamic programming.
\newblock 1996.

\bibitem{strehl2009reinforcement}
Alexander~L Strehl, Lihong Li, and Michael~L Littman.
\newblock Reinforcement learning in finite mdps: Pac analysis.
\newblock {\em The Journal of Machine Learning Research}, 10:2413--2444, 2009.

\bibitem{aastrom2013adaptive}
Karl~J {\AA}str{\"o}m and Bj{\"o}rn Wittenmark.
\newblock {\em Adaptive control}.
\newblock Courier Corporation, 2013.

\bibitem{kappen2005linear}
Hilbert~J Kappen.
\newblock Linear theory for control of nonlinear stochastic systems.
\newblock {\em Physical review letters}, 95(20):200201, 2005.

\bibitem{peters2010relative}
J~Peters, K~M{\"u}lling, Y~Altun, Fox~D Poole, et~al.
\newblock Relative entropy policy search.
\newblock In {\em Twenty-Fourth National Conference on Artificial Intelligence
  (AAAI-10)}, pages 1607--1612. AAAI Press, 2010.

\bibitem{shen2014risk}
Yun Shen, Michael~J Tobia, Tobias Sommer, and Klaus Obermayer.
\newblock Risk-sensitive reinforcement learning.
\newblock {\em Neural computation}, 26(7):1298--1328, 2014.

\bibitem{osogami2012robustness}
Takayuki Osogami.
\newblock Robustness and risk-sensitivity in markov decision processes.
\newblock In {\em Advances in Neural Information Processing Systems}, pages
  233--241, 2012.

\bibitem{chow2015risk}
Yinlam Chow, Aviv Tamar, Shie Mannor, and Marco Pavone.
\newblock Risk-sensitive and robust decision-making: a cvar optimization
  approach.
\newblock In {\em Advances in Neural Information Processing Systems}, pages
  1522--1530, 2015.

\bibitem{ross2011bayesian}
St{\'e}phane Ross, Joelle Pineau, Brahim Chaib-draa, and Pierre Kreitmann.
\newblock A bayesian approach for learning and planning in partially observable
  markov decision processes.
\newblock {\em The Journal of Machine Learning Research}, 12:1729--1770, 2011.

\bibitem{guez2012efficient}
Arthur Guez, David Silver, and Peter Dayan.
\newblock Efficient bayes-adaptive reinforcement learning using sample-based
  search.
\newblock In {\em Advances in Neural Information Processing Systems}, pages
  1025--1033, 2012.

\bibitem{guez2013scalable}
Arthur Guez, David Silver, and Peter Dayan.
\newblock Scalable and efficient bayes-adaptive reinforcement learning based on
  monte-carlo tree search.
\newblock {\em Journal of Artificial Intelligence Research}, pages 841--883,
  2013.

\bibitem{ortega2014monte}
Pedro~A Ortega, Daniel~A Braun, and Naftali Tishby.
\newblock Monte carlo methods for exact \& efficient solution of the
  generalized optimality equations.
\newblock In {\em Robotics and Automation (ICRA), 2014 IEEE International
  Conference on}, pages 4322--4327. IEEE, 2014.

\bibitem{fox2015g}
Roy Fox, Ari Pakman, and Naftali Tishby.
\newblock G-learning: Taming the noise in reinforcement learning via soft
  updates.
\newblock {\em arXiv preprint arXiv:1512.08562}, 2015.

\bibitem{ortega2010minimum}
Pedro~A Ortega and Daniel~A Braun.
\newblock A minimum relative entropy principle for learning and acting.
\newblock {\em Journal of Artificial Intelligence Research}, pages 475--511,
  2010.

\bibitem{ortega2010bayesian}
Pedro~A Ortega and Daniel~A Braun.
\newblock A bayesian rule for adaptive control based on causal interventions.
\newblock In {\em 3d Conference on Artificial General Intelligence (AGI-2010)}.
  Atlantis Press, 2010.

\bibitem{geramifard2015rlpy}
Alborz Geramifard, Christoph Dann, Robert~H Klein, William Dabney, and
  Jonathan~P How.
\newblock Rlpy: A value-function-based reinforcement learning framework for
  education and research.
\newblock {\em Journal of Machine Learning Research}, 16:1573--1578, 2015.

\end{thebibliography}

\end{document}